\documentclass{article}

\usepackage{microtype}
\usepackage{graphicx}
\usepackage{subcaption}
\usepackage{booktabs} % for professional tables
\usepackage{pifont}
\usepackage{array}
\newcommand{\xmark}{\ding{55}}

\usepackage{comment}

\usepackage{hyperref}

\usepackage[preprint]{icml2026}

\usepackage{amsmath}
\usepackage{amssymb}
\usepackage{mathtools}
\usepackage{amsthm}

\usepackage[capitalize,noabbrev]{cleveref}

\theoremstyle{plain}
\newtheorem{theorem}{Theorem}[section]

\newtheorem{lemma}[theorem]{Lemma}

\theoremstyle{definition}

\theoremstyle{remark}

\newcommand{\ours}{\textsc{SPARK}}
\usepackage[textsize=tiny]{todonotes}

\icmltitlerunning{Submission and Formatting Instructions for ICML 2026}

\begin{document}

\twocolumn[
    \icmltitle{\ours: \underline{S}tochastic \underline{P}ropagation via \underline{A}ffinity-guided \underline{R}andom wal\underline{K} for \textit{training-free} unsupervised segmentation}
%   \icmltitle{Diffusion Affinity Isn’t Enough: Zero-Shot Segmentation via
% Local Neighborhood and Random-Walk Propagation}

  % It is OKAY to include author information, even for blind submissions: the
  % style file will automatically remove it for you unless you've provided
  % the [accepted] option to the icml2026 package.

  % List of affiliations: The first argument should be a (short) identifier you
  % will use later to specify author affiliations Academic affiliations
  % should list Department, University, City, Region, Country Industry
  % affiliations should list Company, City, Region, Country

  % You can specify symbols, otherwise they are numbered in order. Ideally, you
  % should not use this facility. Affiliations will be numbered in order of
  % appearance and this is the preferred way.
  \icmlsetsymbol{equal}{*}

\begin{icmlauthorlist}
  \icmlauthor{Kunal Mahatha}{ets,ills}
  \icmlauthor{Jose Dolz}{ets,ills}
  \icmlauthor{Christian Desrosiers}{ets,ills}
\end{icmlauthorlist}

\icmlaffiliation{ets}{LIVIA, École de technologie supérieure (ÉTS), Montréal, Canada}
\icmlaffiliation{ills}{International Laboratory on Learning Systems (ILLS)}
% \icmlaffiliation{}{McGILL - ETS - MILA - CNRS - Université Paris-Saclay - CentraleSupélec, Canada}

\icmlcorrespondingauthor{Kunal Mahatha}{kunal.mahatha.1@ens.etsmtl.ca}

  % \icmlcorrespondingauthor{Firstname2 Lastname2}{first2.last2@www.uk}

  % You may provide any keywords that you find helpful for describing your
  % paper; these are used to populate the "keywords" metadata in the PDF but
  % will not be shown in the document
  \icmlkeywords{Machine Learning, ICML}

  \vskip 0.3in
]

% this must go after the closing bracket ] following \twocolumn[ ...

% This command actually creates the footnote in the first column listing the
% affiliations and the copyright notice. The command takes one argument, which
% is text to display at the start of the footnote. The \icmlEqualContribution
% command is standard text for equal contribution. Remove it (just {}) if you
% do not need this facility.

% Use ONE of the following lines. DO NOT remove the command.
% If you have no special notice, KEEP empty braces:
\printAffiliationsAndNotice{}  % no special notice (required even if empty)
% Or, if applicable, use the standard equal contribution text:
% \printAffiliationsAndNotice{\icmlEqualContribution}

\begin{abstract}
We argue that existing training-free segmentation methods rely on an implicit and limiting assumption, that segmentation is a spectral graph partitioning problem over diffusion-derived affinities. Such approaches, based on global graph partitioning and eigenvector-based formulations of affinity matrices, suffer from several fundamental drawbacks, they require pre-selecting the number of clusters, induce boundary oversmoothing due to spectral relaxation, and remain highly sensitive to noisy or multi-modal affinity distributions. Moreover, many prior works neglect the importance of local neighborhood structure, which plays a crucial role in stabilizing affinity propagation and preserving fine-grained contours.
To address these limitations, we reformulate training-free segmentation as a stochastic flow equilibrium problem over diffusion-induced affinity graphs, where segmentation emerges from a stochastic propagation process that integrates global diffusion attention with local neighborhoods extracted from stable diffusion, yielding a sparse yet expressive affinity structure. Building on this formulation, we introduce a Markov propagation scheme that performs random-walk-based label diffusion with an adaptive pruning strategy that suppresses unreliable transitions while reinforcing confident affinity paths.
Experiments across seven widely used semantic segmentation benchmarks demonstrate that our method achieves state-of-the-art zero-shot performance, producing sharper boundaries, more coherent regions, and significantly more stable masks compared to prior spectral-clustering-based approaches. Our project page is publicly available at: \url{https://kunal-mahatha.github.io/spark/}.
\end{abstract}

\section{Introduction}

Zero-shot image segmentation has become an appealing alternative to fully supervised pipelines, enabling segmentation of arbitrary concepts without pixel-level labels or task-specific finetuning. With the rise of diffusion models, recent work has shown that their internal self-attention maps encode rich semantic structure that can be repurposed as training-free affinity  ~\cite{couairon2024diffcut}. This has led to diffusion-based segmentation methods that operate without any training, prompts, or supervision.

\begin{figure}[t]
    \centering
    \includegraphics[width=.98\linewidth]{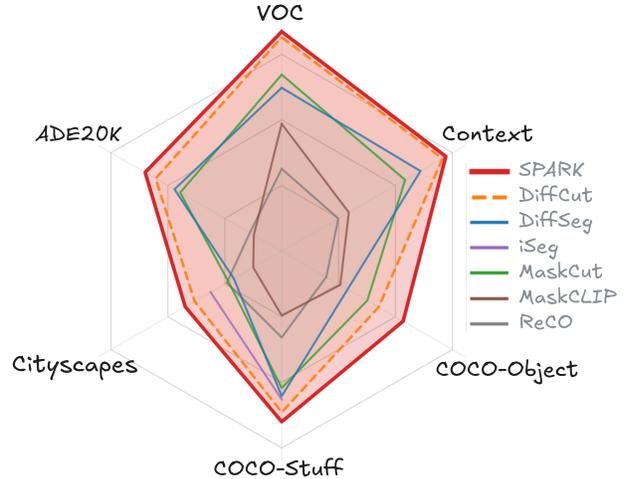}

    \vspace{-5pt}
    \caption{
    \textbf{Radar plot of mIoU (\%) on six benchmarks.} \ours{}, which combines global diffusion-based semantic affinity with local random-walk propagation, consistently outperforms prior training-free methods across all datasets, demonstrating stronger cross-domain generalization and more reliable semantic propagation without supervision.}
    \label{motovation}

    \vspace{-5pt}
\end{figure}

Many of these methods \cite{couairon2024diffcut, fan2022simple, tian2024diffuse, ivanova2024unsupervised} formulate segmentation as a global graph partitioning problem over diffusion-derived affinities, typically solved via spectral or eigenvector-based objectives. While intuitive, this formulation introduces fundamental limitations. Eigenvectors of the graph Laplacian are highly unstable: small perturbations in diffusion attention, common due to stochastic sampling and multi-modal semantics, can induce large rotations in eigenspaces, leading to fragmented or inconsistent partitions. Moreover, spectral relaxation favors low-frequency global modes, inherently oversmoothing object boundaries and suppressing fine-scale structures. Balanced partition constraints further bias solutions toward coarse global splits, which is poorly matched to the highly unbalanced, multi-object nature of natural images. At high resolution, approximate eigensolvers exacerbate these issues, further degrading segmentation fidelity.

These observations lead to a critical insight: \emph{diffusion affinity alone is not enough}. While diffusion attention provides strong long-range semantic correspondence, it lacks the \emph{local geometric grounding} required for spatially coherent segmentation. Global affinity alone easily propagates noise, ignores boundary cues, and struggles to maintain local consistency, limitations that spectral cuts further amplify. What is needed is a mechanism that (\emph{i}) leverages the semantic benefits of diffusion attention, (\emph{ii}) incorporates local structural information, and (\emph{iii}) avoids the brittleness of eigenvalue decomposition.

To address these limitations, we introduce \ours{}, a simple yet effective training-free segmentation framework that leverages stochastic-flow propagation over a pixel affinity graph. By selectively concentrating probability mass on the most relevant patch interactions, \ours{} induces a naturally sparse and focused attention structure. Our method applies an iterative expansion–inflation procedure, inspired by Markov Clustering~\cite{vondongen2008graph}, to produce flow-preserving clusters that correspond to coherent object segments, without relying on eigenvectors, training, or prompts. Moreover, to jointly capture long-range semantic interactions and enforce spatial smoothness with boundary sensitivity, \ours{} combines pixel affinities derived from both diffusion self-attention and local neighborhood.

These results empirically demonstrate that reframing segmentation as a stochastic flow equilibrium problem leads to more robust and stable region discovery than spectral graph partitioning, validating the proposed dynamical systems perspective for training-free segmentation.

\noindent\textbf{Our contributions are:}
\begin{itemize}
    \item We identify fundamental limitations of \emph{spectral and eigenvector-based graph partitioning formulations} when applied to diffusion-derived affinity graphs, including instability to noise, boundary oversmoothing, and bias toward coarse global partitions.
    
    \item In light of these limitations, we propose \ours{}, a training-free segmentation framework that fuses \emph{global diffusion affinity} with \emph{local spatial structure} and replaces spectral relaxation with a \emph{stochastic Markov-flow propagation} mechanism that naturally induces sparse, coherent segmentations.
    
    \item Comprehensive experiments across six popular benchmarks showcase the superiority of \ours{}, which achieves state-of-the-art, training-free segmentation performance as demonstrated in Figure~\ref{motovation}, substantially improving boundary accuracy, region coherence, and segmentation stability over prior diffusion-based methods.
\end{itemize}

\section{Related Works}
A large body of work formulates object detection and unsupervised segmentation as a graph partitioning problem over patch or pixel affinities. Methods such as LOST \cite{simeoni2021localizing}, TokenCut \cite{wang2023tokencut}, FOUND \cite{simeoni2023unsupervised}, and MaskCut \cite{wang2023cut} construct graphs from self-supervised ViT features and apply spectral or normalized cut objectives to identify salient objects or partition images into regions. While effective for coarse grouping, these approaches rely on eigenvector-based graph partitioning, which introduces balanced partition bias, boundary oversmoothing, and sensitivity to noise, making them poorly suited for fine-grained semantic segmentation.

Recent works has shown that diffusion models encode rich semantic correspondence in their attention maps, enabling training-free zero-shot segmentation. DiffSeg \cite{tian2024diffuse}, EmerDiff \cite{namekata2024emerdiff}, and related methods extract self- or cross-attention from Stable Diffusion to obtain pixel or patch affinities. Several approaches, including DiffCut \cite{couairon2024diffcut} and \cite{fan2022simple,ivanova2024unsupervised}, further formulate segmentation as a global graph partitioning problem over these diffusion-derived affinities, typically solved via spectral or eigenvector-based objectives. While diffusion attention provides strong long-range semantic cues, such spectral formulations struggle to translate them into spatially coherent segmentation due to eigenvector instability and the lack of local geometric grounding.

Moving beyond spectral graph partitioning, we introduce the first training-free, fully unsupervised segmentation framework based on stochastic flow propagation~\cite{vondongen2008graph}. While stochastic flow propagation was originally developed for graph partitioning, where expansion–inflation dynamics identify dense regions by amplifying consistent transition paths and suppressing weak affinities, we repurpose this mechanism for image segmentation. Our \ours{} method integrates diffusion-based semantic affinities with local spatial consistency within a unified stochastic flow framework, enabling boundary-aware zero-shot segmentation without training, prompts, or eigenvector computation.

\begin{figure*}[t]
    \centering
    \includegraphics[width=0.9\textwidth]{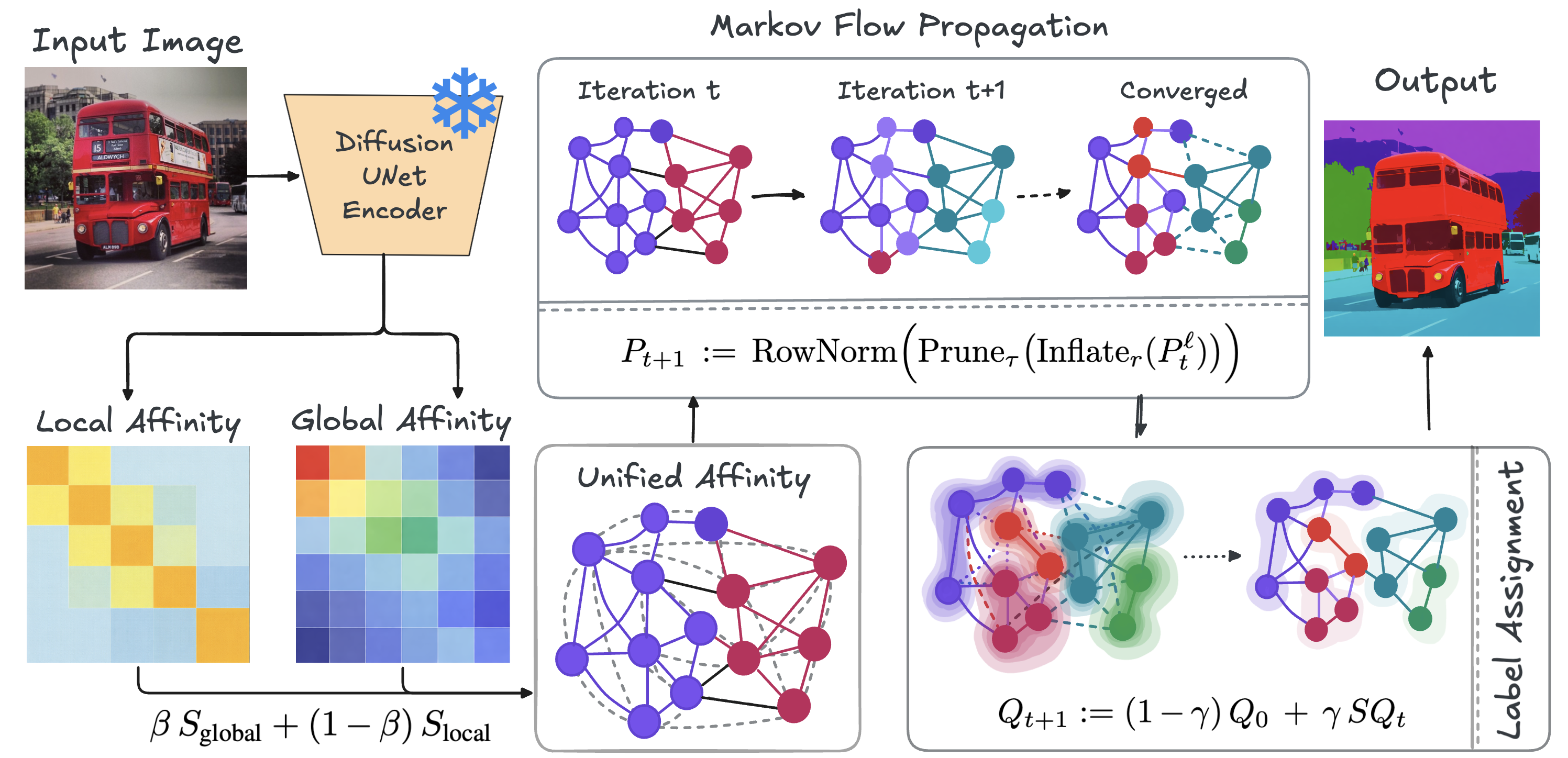}
    \caption{\textbf{Overview of our training-free unsupervised segmentation pipeline.} From an input image, we extract self-attention maps using a frozen Diffusion U-Net encoder to form a global affinity $S_{\text{global}}$ and a sparse local affinity $S_{\text{local}}$. These are normalized and fused into a unified affinity $S$, which defines a Markov random walk over the pixel graph. Iterative Markov-flow propagation drives the system toward stable flow-preserving clusters, whose stationary distribution is used for label assignment, yielding the final segmentation.}
    \label{fig:pipeline}
\end{figure*}

\section{Proposed method}

The general pipeline of \ours{} is shown in Fig.~\ref{fig:pipeline}. This pipeline is composed of three steps: 1) transition matrix construction, and 2) Markov-flow clustering, and 3) random-walk label propagation, which are detailed in the following subsections.

\subsection{Transition Matrix Construction}

Inspired by Markov Clustering~\cite{vondongen2008graph}, we formulate the unsupervised segmentation task as a stochastic flow process on a graph $\mathcal{G} = (\mathcal{V}, \mathcal{E})$.  
Each node $i \in \mathcal{V}$ corresponds to a spatial token and each edge $(i,j)\in\mathcal{E}$ is weighted by affinity $a_{i,j}$ that captures both global and local relationships.  

\mypar{Global Diffusion Affinity.}
We begin by constructing a global affinity matrix using the features from the last layer of a pre-trained Stable Diffusion U-Net \cite{rombach2021highresolution}. The feature map having a spatial resolution of $H \,{\times}\, W$, we define the $N = HW$ per-token features as $\{X_i\}_{i=1}^N$, where $X_i \in \mathbb{R}^C$.

The global semantic affinity between two tokens is computed as the inner product of their diffusion features:
\begin{equation}
[A_{\text{global}}]_{i,j}
\,:=\,
\langle X_i, X_j \rangle,% }{\max_{u,v} X_u^\top X_v},
\quad A_{\text{global}} = X X^{\top}
\label{eq:global}
\end{equation}
where $X \in \mathbb{R}^{N\times C}$ is the token feature matrix.
This global affinity enables information to propagate across distant but semantically related regions. 

\mypar{Local Spatial Affinity.}
While global semantic affinity captures long-range correspondence, it lacks spatial precision and may connect distant but visually similar regions while ignoring local image structure. To enforce local consistency and boundary sensitivity, we construct a complementary affinity matrix over an 8-connected spatial neighborhood. Specifically, for each pixel $i$ with feature vector $X_i$, we define its neighbors as
\begin{equation}
\begin{aligned}
\mathcal{N}(i) \,=\, & \big\{ j : (dh,dw)\in\{-1,0,1\}^2\setminus\{(0,0)\},\\
& \quad \mathbf{x}_j=\mathbf{x}_i+(dh,dw) \big\}.
\end{aligned}
\end{equation}
We then define the local affinity as a sparse cosine-similarity graph:
\begin{equation}
[A_{\text{local}}]_{i,j} :=
\begin{cases}
1, & i=j,\\[4pt]
\frac{\langle X_i , X_j\rangle}{\|X_i\|\,\|X_j\|} + \epsilon, & j\in\mathcal{N}(i),\\[6pt]
0, & \text{otherwise},
\end{cases}
\label{eq:local}
\end{equation}
where $\epsilon>0$ prevents degenerate zero affinities.  
This sparse local graph enforces spatial smoothness, preserves object boundaries, and stabilizes the subsequent random-walk propagation in regions where global semantic affinity may be noisy or diffuse.

\mypar{Fused Global and Local Affinity.}
To leverage both long-range semantic cues and short-range spatial structure, we combine the two affinity matrices through a convex fusion. In particular, after row-normalizing each matrix,
\[
S_{\text{global}}=\mathrm{RowNorm}(A_{\text{global}}), \quad
S_{\text{local}}=\mathrm{RowNorm}(A_{\text{local}}),
\]
we compute the final transition matrix as
\begin{equation}
S := \beta\, S_{\text{global}} 
    + (1\,{-}\,\beta)\, S_{\text{local}},
\label{eq:fused}
\end{equation}
where $\beta\in[0,1]$ balances semantic propagation and spatial coherence.  
This fused stochastic matrix forms the backbone of \ours{}: it drives the random-walk process, reinforces high-confidence semantic connections, and yields robust, spatially consistent segmentation results.

\subsection{Markov-Flow Clustering}
\label{sec:markov-flow}

Unlike classical random-walk segmentation, which requires a node-to-label generation matrix and computes an infinite-length walk distribution, our method identifies \emph{flow-preserving regions} directly from the structure of $S$. The intuition is that pixels belonging to the same object form strongly connected components under repeated stochastic propagation, while weak cross-object transitions decay over time.

To extract these components, we start with a matrix $P_0 := S$ and update this matrix iteratively with an operator consisting of four sequential steps:

\mypar{1) Expansion.}
We first propagate flow by computing an $\ell$-step transition ($\ell$-th matrix power of $P$)
\begin{equation}
\mathrm{Expand}_\ell(P) \,:=\, \underbrace{P \cdot P \cdot P\, \cdots}_{\ell \text{ times}} \,=\, P^{\ell},
\end{equation}
with $\ell \in \mathbb{N}^+$ (typically $\ell=2$ or $3$).  
This operation corresponds to a short random walk on the graph, strengthening multi-hop associations within dense regions. 

\mypar{2) Inflation.}
To sharpen the connectivity structure, we then apply an \emph{element-wise} power
\begin{equation}
\big[\mathrm{Inflate}_r(P)\big]_{i,j} \,:=\, P_{i,j}^{r},
\end{equation}
with inflation parameter $r\,{>}\,1$. This step amplifies dominant transitions while suppressing weak ones, acting as a nonlinear diffusion that enhances dense clusters.

\mypar{3) Pruning.}
Next, to promote sparsity and stabilize the emerging block structure, we prune entries smaller than a threshold $\tau$:
\[
\big[\mathrm{Prune}_\tau(P)\big]_{i,j} \, := \, \left\{
\begin{array}{ll}
P_{i,j}, & \text{if } P_{i,j} \geq \tau\\
0,  & \text{esle}
\end{array}
\right.
\]
\mypar{4) Normalization.}
Finally, we row-normalize the matrix to restore stochasticity:
\begin{equation}
\big[\mathrm{RowNorm}(P)\big]_{i,j}
\,:=\,
\frac{P_{i,j}}{\sum_{j'} P_{i,j'}}.
\end{equation}
The full update rule can be summarized as 
\begin{equation}\label{eq:update-rule}
P_{t+1}
\,:=\,
\mathrm{RowNorm}\Big(
  \mathrm{Prune}_\tau\big( \mathrm{Inflate}_r(P_t^\ell)\big)\Big).
\end{equation}

\mypar{Coarse Cluster Extraction.}
The rule of Eq. (\ref{eq:update-rule}) is applied iteratively until no further change is observed (see Supplementary materials for convergence analysis):
\[
\big\|P_{t+1} - P_t\big\|_{\infty} < \varepsilon.
\]
Upon convergence, the fixed-point matrix $P^\ast$ exhibits an approximately block-diagonal structure, where each block corresponds to a \emph{flow-preserving ergodic component} of the underlying Markov chain. Within each block, transition probabilities remain high, while transitions across blocks are progressively suppressed through repeated application of expansion, inflation, and pruning. Intuitively, a larger expansion coefficient $\ell$ emphasizes connectivity at coarser graph scales, whereas increasing the inflation parameter $r$ accelerates convergence toward a near-binary stochastic matrix $P^\ast$ that highlights densely connected regions.

Clusters are obtained from $P^\ast$  as the indexes of columns $\{j_1, \ldots, j_K\}$ having at least one non-zero entry, i.e., the graph nodes with at least one positive flow value). From a dynamical systems perspective, these nodes act as \emph{attractors} of the Markov dynamics: once flow enters such a region, it remains confined there under further propagation. Each one defines a cluster 
\begin{equation}
\mathcal{C}_k \, := \, \big\{i \, | \, j_k = \arg\max_j \, P^\ast_{i,j}\big\}
\end{equation}
This partition emerges from the stochastic flow itself, without requiring eigenvectors or pre-specified number of clusters.

\subsection{Random-Walk Label Propagation}

To refine the coarse clusters obtained in the previous step, we apply a final random-walk diffusion using the cluster indexes as seed. Denote as $C \in \{0,1\}^{N\times K}$ the pixel-to-cluster assignment matrix such that
\[
C_{i,k} = 1 \text{ if } i \in \mathcal{C}_k, \text{ else } C_{i,k} = 0.
\]
We convert $C$ into a stochastic matrix $Q_0$ via row-normalization,
\[
Q_0 := \mathrm{RowNorm}(C),
\]
and then perform the following update iteratively:
\begin{equation}
    Q_{t+1}
    := (1\,{-}\,\gamma)\,Q_0
      \, + \, \gamma\, S Q_t
    \label{eq:rw-iter}
\end{equation}
where $0\,{<}\,\gamma\,{<}\,1$ balances cluster seed consistency and spatial propagation. After convergence, the solution $Q^{\ast}$ satisfies the discrete Dirichlet boundary condition:
\begin{equation}
    (I - \gamma S)\,Q^{\ast} = (1\,{-}\,\gamma)\,Q_0.
    \label{eq:rw-solution}
\end{equation}
Finally, the label of pixel $i$ is chosen as the cluster index with highest random-walk probability:
\begin{equation}
    \hat{y}_i \, := \, \arg\max_{k} \, Q^{\ast}_{i,k}.
\end{equation}

\begin{table*}
\scriptsize
\centering
\caption{\textbf{Unsupervised segmentation results across six benchmarks.} We report zero-shot mIoU using diffusion features. Best results are shown in \textbf{bold}, and second–best are \underline{underlined}, whereas differences with second best method are indicated in \textbf{\textcolor{mydarkgreen}{green}}. \ours{} consistently outperforms prior training-free approaches, including DiffCut, and Seg4Diff on all the datasets}
\vspace{0.50em}

\label{tab:main_result}

\resizebox{\textwidth}{!}{
\def\arraystretch{1.2}
\begin{tabular}{l|ccc|ccc}
\toprule
Model & \textbf{VOC} & \textbf{Context} & \textbf{COCO-Object}
      & \textbf{COCO-Stuff-27} & \textbf{Cityscapes} & \textbf{ADE20K} \\
\midrule

ReCO \cite{shin2022reco}\textcolor{gray}{$_\text{ NeurIPS'22}$}         & 25.1 & 19.9 & 15.7 & 26.3 & 19.3 & 11.2 \\
%\addlinespace[2pt]
MaskCLIP \cite{ding2023maskclip}\textcolor{gray}{$_\text{ ICML'23}$}     & 38.8 & 23.6 & 20.6 & 19.6 & 10.0 &  9.8 \\
%\addlinespace[2pt]
MaskCut \cite{wang2023cut}\textcolor{gray}{$_\text{ CVPR'23}$}      & 53.8 & 43.4 & 30.1 & 41.7 & 18.7 & 35.7 \\
%\addlinespace[2pt]
iSeg \cite{sun2024iseg}\textcolor{gray}{$_\text{ Arxiv'24}$}         & \xmark & \xmark & \xmark & 45.2 & 25.0 & \xmark \\
%\addlinespace[2pt]
DiffSeg \cite{tian2024diffuse}\textcolor{gray}{$_\text{ CVPR'24}$}      & 49.8 & 48.8 & 23.2 & 44.2 & 16.8 & 37.7 \\
%\addlinespace[2pt]
DiffCut \cite{couairon2024diffcut}\textcolor{gray}{$_\text{ NeurIPS'24}$} &
\underline{65.2} & \underline{56.5} & 34.1 &
49.1 & \underline{30.6} & 44.3 \\
%\addlinespace[2pt]
Seg4Diff \cite{kim2025seg4diff}\textcolor{gray}{$_\text{ NeurIPS'25}$} &
54.9 & 52.6 & \underline{38.5} & \underline{49.7} & 24.2 & \underline{44.9} \\
\midrule
\textbf{\ours{}} &
\textbf{66.9}\better{1.7} & \textbf{57.7}\better{1.2} & \textbf{42.7}\better{4.2} &
\textbf{52.0}\better{2.3} & \textbf{33.9}\better{3.5} & \textbf{48.0}\better{3.1} \\
\bottomrule
\end{tabular}
} % end resizebox
\end{table*}

\section{Experiments}

\subsection{Setup}

\mypar{Datasets.} 
We evaluate our approach across a diverse suite of semantic segmentation benchmarks. \emph{PASCAL~VOC} \cite{voc12} provides high-quality, object-centric images annotated over 20 foreground categories, serving as a classical benchmark for assessing general object segmentation. \emph{PASCAL~Context} \cite{pascalcontext} extends this setting by offering dense, scene-level annotations with a significantly larger label space. To evaluate performance on large-scale, open-domain visual concepts, we include \emph{COCO-Object} \cite{actualcoco}, which focuses on object regions, and \emph{COCO-Stuff-27} \cite{coco}, a compact variant emphasizing 27 coarse-grained ``stuff'' categories that capture global scene layout. We further assess our method on \emph{Cityscapes} \cite{cityscapes}, a real-world urban-driving dataset with fine-grained pixel annotations designed for structured street-scene understanding. Finally, \emph{ADE20K} \cite{ade20k} provides one of the most diverse and challenging segmentation environments, featuring 150 object and stuff classes across complex indoor and outdoor scenes. Together, these datasets form a comprehensive evaluation suite spanning object-centric, stuff-centric, and real-world structured environments.

\mypar{Baselines.} 
We compare \ours{} against a comprehensive set of recent training-free and weakly supervised open-vocabulary segmentation methods. \emph{ReCO}~\cite{shin2022reco} performs CLIP-guided region contrast to refine pseudo-masks from textual prompts. \emph{MaskCLIP}~\cite{ding2023maskclip} extends CLIP to pixel-level reasoning through feature masking and token-level activation maps. \emph{MaskCut}~\cite{wang2023cut} introduces a foreground-background separation strategy based on CLIP similarity and graph-cut refinement. \emph{iSeg}~\cite{sun2024iseg} proposes attention-driven affinity propagation using Stable Diffusion features for high-quality segmentation without training. \emph{DiffSeg}~\cite{tian2024diffuse} leverages diffusion model attention priors to extract dense semantic cues from generative backbones. \emph{DiffCut}~\cite{couairon2024diffcut} formulates segmentation as a diffusion-guided spectral partitioning problem, producing reliable object proposals from cross-attention structure. Finally, the recent \emph{Seg4Diff}~\cite{kim2025seg4diff} studies multi-modal diffusion transformers, identifying critical layers that yield high-quality segmentation proposals for training-free unsupervised segmentation. Together, these baselines represent the state of the art in training-free unsupervised segmentation, offering a strong comparison point for evaluating our method.

\mypar{Evaluation Metric.}We evaluate segmentation quality using mean Intersection-over-Union (mIoU), computed as the average IoU across all semantic classes. For a given class $c$, IoU is defined as
\begin{equation}
\mathrm{IoU}_c = \frac{|P_c \cap G_c|}{|P_c \cup G_c|},
\end{equation}
where $P_c$ and $G_c$ denote the predicted and ground-truth pixel sets for class $c$, respectively. The final mIoU score is obtained by averaging $\mathrm{IoU}_c$ over all classes.

\mypar{Implementation Details.} \ours{} is built on top of SSD-1B \cite{gupta2024progressive}, a distilled variant of Stable Diffusion XL~\cite{podell2023sdxl}. 
For all experiments, the model is conditioned on an empty text prompt and uses a fixed denoising timestep of 
\( t\,{=}\,50 \). Following prior work, we apply PAMR~\cite{araslanov2020single} as a post-processing refinement step to improve 
the quality of the predicted segmentation masks. Note that PAMR is applied on top of every method in our empirical comparisons. Our full pipeline runs on a single NVIDIA RTX A6000 (48\,GB) 
and processes input images of size \(1024\,{\times}\,1024\). For all experiments, we use an expansion coefficient of $\ell\,{=}\,2$ and a small pruning threshold of $\tau\,{=}\,1e{-}7$. The impact of other parameters is explored in Section \ref{sec:ablation}.

\subsection{Main Results}

\noindent \textbf{Quantitative evaluation.} As shown in Table~\ref{tab:main_result}, our proposed \emph{\ours{}} yields state-of-the-art zero-shot segmentation performance across all six benchmarks, outperforming all existing \textit{training-free} methods, by a clear margin. Older baselines such as ReCO, MaskCLIP, and MaskCut often struggle to recover complete object structures, while diffusion-based methods like DiffSeg and DiffCut benefit from strong semantic cues, yet remain constrained by the normalized graph cut formulation, which tends to oversmooth boundaries and erases fine-grained details. This leads to suboptimal segmentation results even for recent state-of-the-art approaches, 
particularly in datasets where precise boundary localization and object completeness are critical, e.g., DiffSeg in COCO-Object, Cityscapes, and ADE20K or Seg4Diff in Cityscapes. 
In contrast, the stochastic random-walk propagation strategy introduced in \emph{\ours{}} 
strengthens high-confidence affinity paths and suppresses noisy transitions, yielding more stable and spatially coherent segmentation masks. This results in consistent improvements across all datasets over very recent methods, such as DiffCut and Seg4Diff. For example, compared to the second best approach, \emph{\ours{}} yields improvements of \betterrr{1.7} on VOC, \betterrr{1.2} on Context, \betterrr{4.2} on COCO-Object, \betterrr{2.3} on COCO-Stuff-27, \betterrr{3.5} on Cityscapes, and \betterrr{3.1} on ADE20K. These gains demonstrate that diffusion affinity alone is insufficient without an effective propagation mechanism, firmly establishing \emph{\ours{}} as a more robust and reliable training-free alternative to current SOTA.

\noindent \textbf{Quantitative Analysis.} Fig.~\ref{fig:main-viz} provides a qualitative comparison with the recent DiffCut approach\footnote{From this point on, we adopt DiffCut as our main SoTA reference, since it achieves the second-best overall performance.}. %than Seg4Diff.}. 
From these visualizations, we observe that DiffCut tends to produce coarser and more uniform segmentations, often blending large regions such as sky, land, and sea into a single cluster. In contrast, \ours{} yields more structured and spatially coherent segments, particularly along object boundaries and fine-grained regions such as shorelines and building facades. Specifically, DiffCut frequently confuses semantically distinct areas (e.g., sky with background terrain), fails to capture smaller or localized structures such as the tree in the third example, and struggles to separate multiple meaningful objects within a scene. Moreover, in the portrait example, DiffCut misses fine details such as the subject’s hair, which are correctly delineated by \ours{}. Similarly, in the stadium scene, DiffCut oversmooths the segmentation and fails to isolate distinct structural components, while \ours{} preserves clearer object-level separation. Finally, in the indoor scene, DiffCut blends the bed and wall into a single region, whereas \ours{} produces more accurate and meaningful partitions. Overall, the visualization highlights \ours{}’s improved ability to preserve local details and capture meaningful spatial organization compared to DiffCut.

\begin{figure*}[t]
    \centering
    \includegraphics[width=\textwidth]{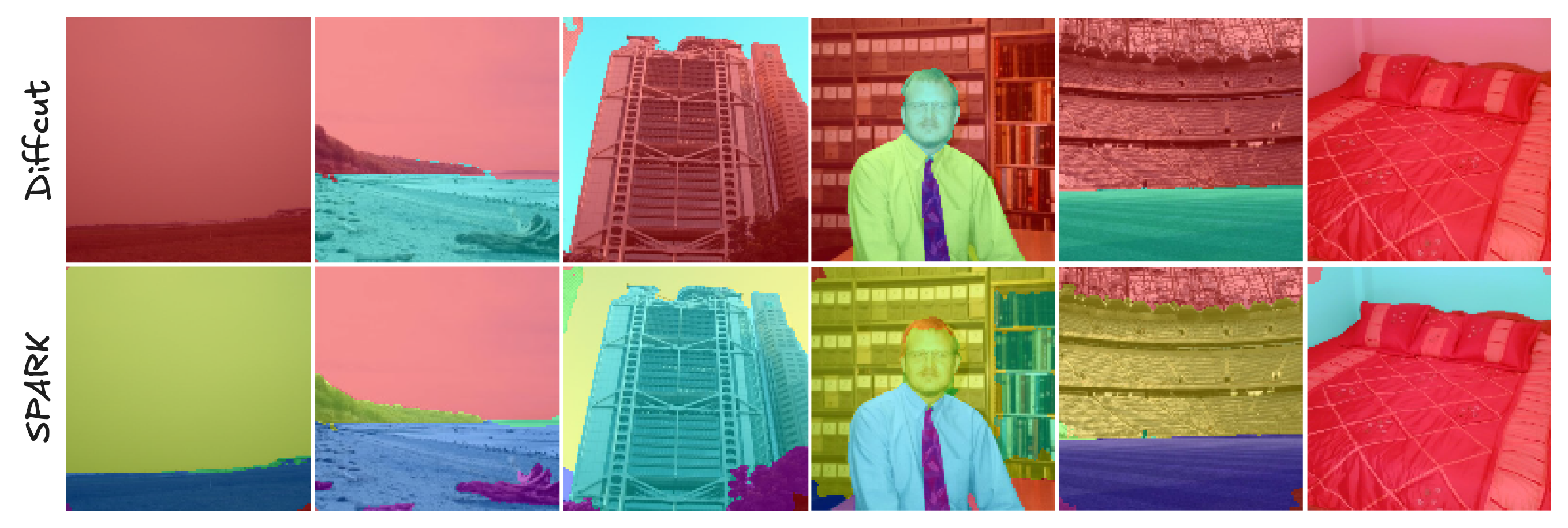}

    \vspace{-5pt}
    \caption{\textbf{\ours{} vs DiffCut} Qualitative comparison of segmentation maps produced by DiffCut (top row) and \ours{} (bottom row) on a variety of outdoor and indoor scenes. SPARK produces more spatially coherent and fine-grained segments, particularly along object boundaries.}
    \label{fig:main-viz}
    \vspace{-5pt}
\end{figure*}

\subsection{Ablation Study}\label{sec:ablation}
In this section, we perform a series of ablation studies to analyze the proposed approach, and the impact of some of its components on segmentation performance.

\mypar{Analysis for global and local affinity map (Table~\ref{table:ablations}).} While integrating the global affinity matrix typically achieves superior performance than DiffCut, incorporating local affinity
leads to largest gains, with consistent improvements across all benchmarks. More specifically, compared to the \textit{\ours{} w/ Global Affinity} version, these gains range from \betterrr{0.8} (Context) to \betterrr{6.4} (COCO-Object). 
These results demonstrate that
combining global and local affinity yields consistent accuracy gains across datasets, confirming the complementary nature of the two components within the diffusion affinity framework.

\mypar{Impact of post-processing, i.e., PAMR (Fig.~\ref{fig:pamr}).} A common strategy in the training-free unsupervised segmentation literature is to refine segmentation masks generated by the different methods with PAMR \cite{araslanov2020single}, as a post-processing step. To isolate the contribution of our core method, we now evaluate performance of \ours{} both with and without PAMR. As shown in Fig.~\ref{fig:pamr}, applying PAMR consistently improves mIoU by 2–5\% across all six benchmarks. Nevertheless, a key observation is that our raw predictions (i.e., without any post-processing) already match or exceed the PAMR-refined outputs of prior state-of-the-art methods (Table~\ref{tab:main_result}). This suggests that our approach produces inherently high-quality masks, reducing reliance on external refinement and highlighting its robustness in zero-shot settings. 

\begin{figure}[h!]
    \centering
    \includegraphics[width=\linewidth]{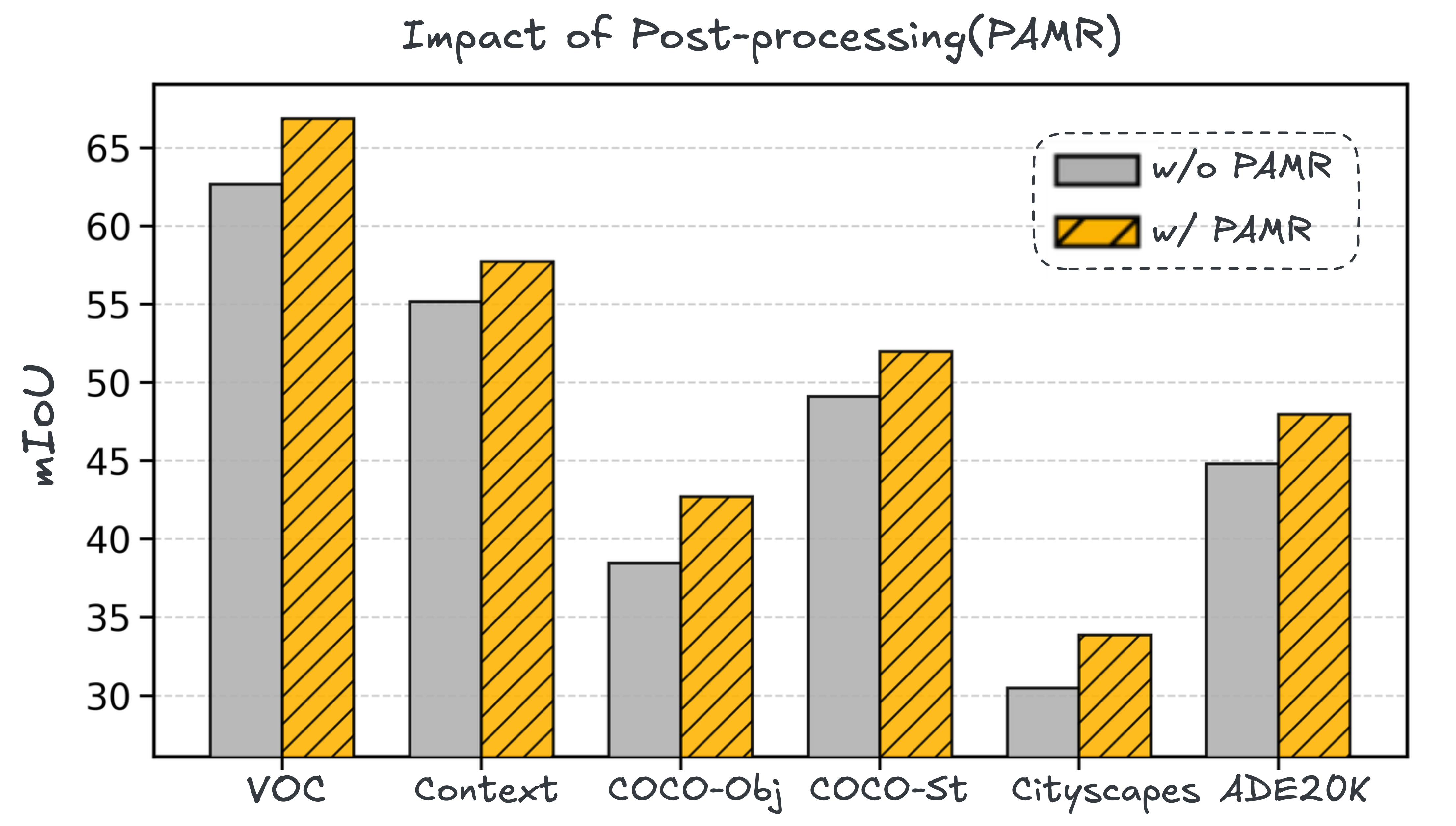}

    \vspace{-5pt}
    \caption{\textbf{Ablation of post-processing with PAMR.}
     We compare mIoU before and after applying PAMR on the validation splits of each dataset.}
    \label{fig:pamr}

    \vspace{-5pt}
\end{figure}

\mypar{Ablation for inflation factor.}
Figure~\ref{fig:inflation} presents the ablation study on the \emph{inflation factor}, evaluated on ADE20K. As the inflation factor increases from 1.5 to 2.4, mIoU improves steadily, indicating that progressively stronger cluster expansion enhances region consistency and label propagation. 
The highest performance is observed at an inflation factor of 2.6, suggesting an optimal balance between encouraging intra-region coherence and avoiding excessive fragmentation. Beyond this point, further increasing the inflation factor leads to a slight but consistent decline in mIoU, likely due to over-segmentation effects that disrupt semantic continuity. 
Overall, these results demonstrate that moderate inflation yields the best results, remaining stable across nearby values. 

\begin{figure}[t]
    \centering
    \includegraphics[width=.98\linewidth]{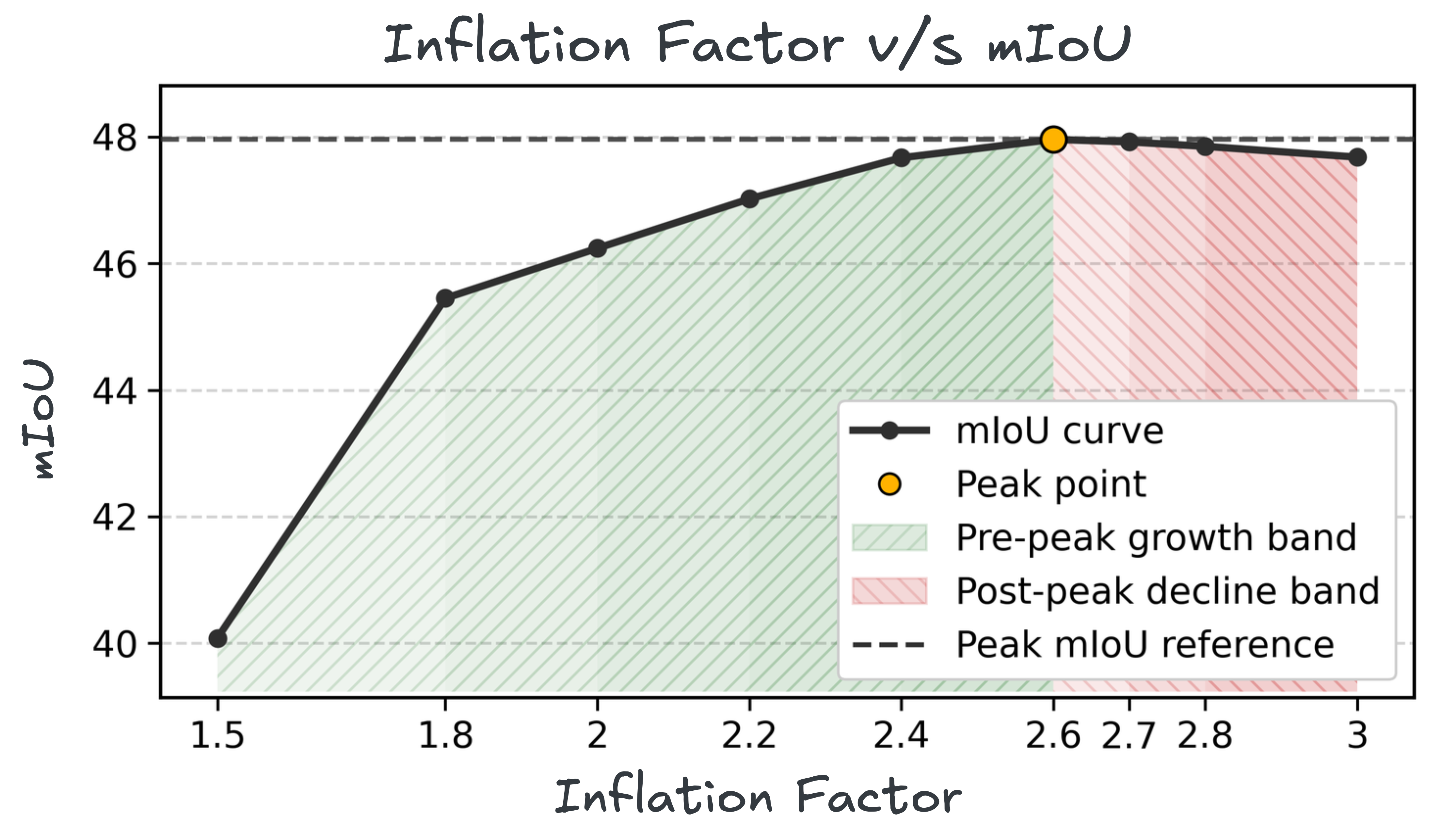}

    \vspace{-5pt}
    \caption{
    \textbf{Effect of the inflation factor on segmentation performance,} %(mIoU), 
    mIoU on ADE20K evaluated across a range of values, with the peak region highlighted and a dashed line indicating the maximum mIoU level.}
    \label{fig:inflation}

    \vspace{-5pt}
\end{figure}

\mypar{Sensitivity to $\boldsymbol{\beta}$.}
Figure~\ref{fig:beta} reports the ablation results for the scaling parameter $\beta$ on ADE20K. 
Performance remains largely stable for smaller values of $\beta$ in the range $[0, 0.4]$, 
indicating that weak scaling has limited impact on the overall behavior of the method. 
As $\beta$ increases, a consistent improvement is observed, with the highest mIoU attained at $\beta\,{=}\,0.6$, 
suggesting that moderate scaling enables more effective information propagation. 
Beyond this regime, larger values of $\beta$ lead to a gradual performance drop, likely due to excessive emphasis on propagated affinities, 
which can suppress discriminative local cues. 
An extreme degradation at $\beta\,{=}\,1.0$ indicates instability when the scaling fully dominates the update. 
Overall, these results show that $\beta$ provides meaningful gains when properly balanced, 
highlighting the importance of careful hyperparameter tuning.

\begin{figure}[h!]
    \centering
    \includegraphics[width=\linewidth]{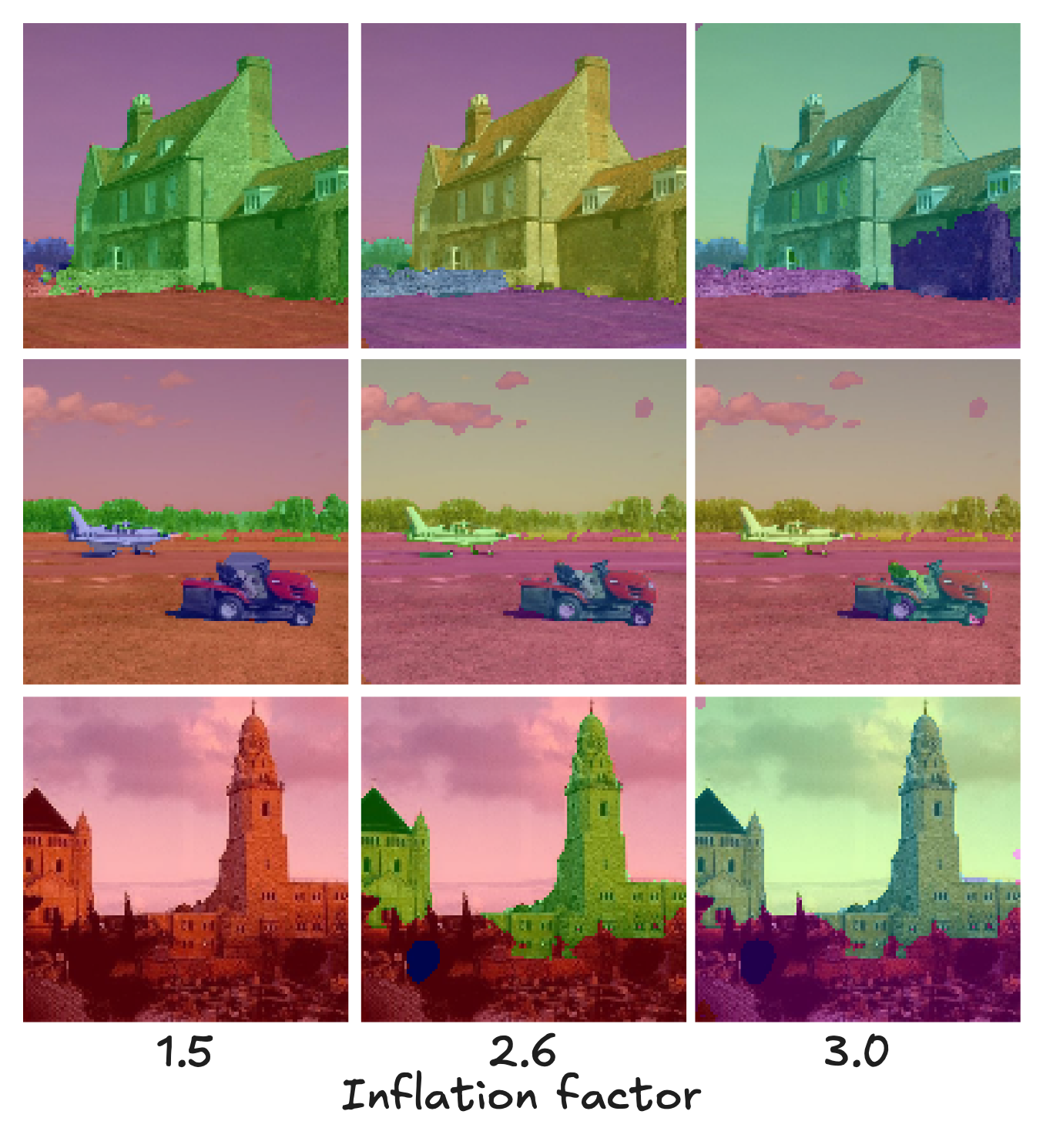}

    \vspace{-5pt}
    \caption{\textbf{Qualitative ablation for \emph{Inflation factor}.}}
    \label{fig:infa-viz}

    \vspace{-5pt}
\end{figure}

\begin{table*}[h!]
\scriptsize
\centering
\caption{\textbf{Ablation study of diffusion affinity components.} We evaluate the contribution of global and local affinity in \ours{} across six benchmarks. Incorporating local affinity consistently improves performance over both DiffCut and the global-only variant of \ours{}.}

\vspace{0.4em}
\label{table:ablations}

\resizebox{\textwidth}{!}{
\def\arraystretch{1.2}
\begin{tabular}{l|ccc|ccc}

\toprule
Affinity / Model & \textbf{VOC} & \textbf{Context} & \textbf{COCO-Object}
                 & \textbf{COCO-Stuff-27} & \textbf{Cityscapes} & \textbf{ADE20K} \\
\midrule

DiffCut  
& 65.2 & 56.5 & 34.1 & 49.1 & 30.6 & 44.3 \\

{\ours{} w/ Global Affinity}  
& {66.0}\better{0.8} & {56.9}\better{0.4} & {36.3}\better{2.2}
& {51.7}\better{2.6} & {30.3}\worse{0.3} & {44.1}\better{0.2} \\
%\addlinespace[2pt]

\midrule
% \rowcolor{brightgray}
\textbf{\ours{} w/ Global + Local}  
& \textbf{66.9}\better{1.7} & \textbf{57.7}\better{1.2} & \textbf{42.7}\better{8.6} &
\textbf{52.0.}\better{2.9} & \textbf{33.9}\better{3.5} & \textbf{48.0}\better{3.7} \\
%\addlinespace[2pt]
\bottomrule
\end{tabular}
}
\end{table*}

\mypar{Qualitative Analysis of Inflation factor.}In addition to the numerical trends, the qualitative results in Figure~\ref{fig:infa-viz} further illustrate the impact of the inflation factor on segmentation behavior. At lower values (e.g., 1.5), the model exhibits under-segmentation, where large regions are overly merged and object boundaries are poorly distinguished. As the inflation factor increases, the segmentation becomes progressively more refined, with improved separation of semantically meaningful regions and clearer object contours. At the optimal setting (2.6), the segmentation achieves a balanced structure, preserving intra-region coherence while accurately delineating distinct objects. However, for higher values (e.g., 3.0), the segmentation begins to fragment, introducing unnecessary splits within coherent regions, which is consistent with the observed decline in mIoU. These visual patterns closely align with the quantitative analysis, confirming that moderate inflation promotes the best trade-off between region expansion and fragmentation.

\begin{figure}[h!]
    \centering
    \includegraphics[width=\linewidth]{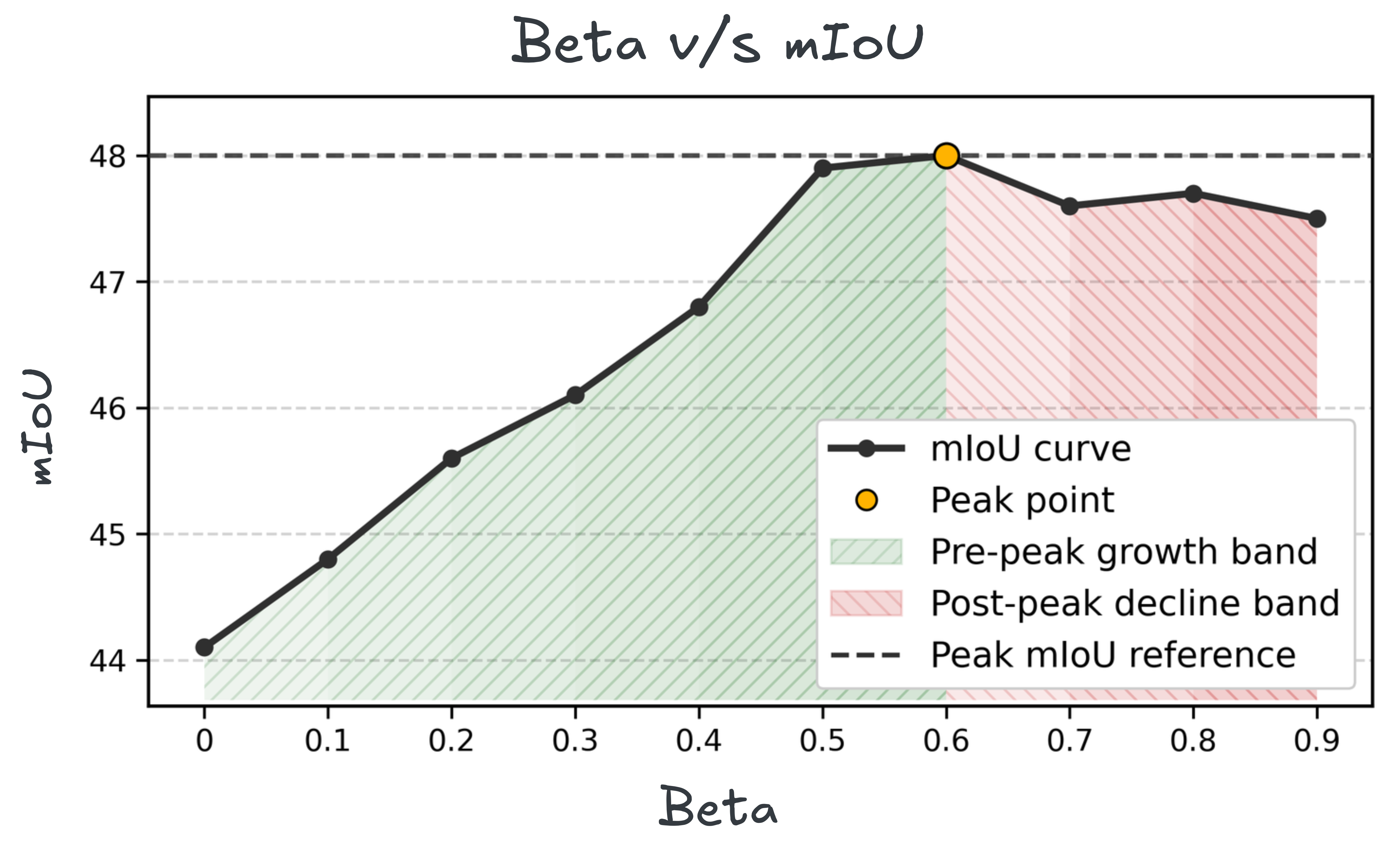}

    \vspace{-5pt}
    \caption{\textbf{Sensitivity to $\boldsymbol{\beta}$.}
    mIoU on ADE20K evaluated across different values of the $\beta$ hyperparameter, with bars grouped to indicate pre-peak, peak-region, and post-peak settings, and a dashed line marking the reference mIoU level.}
    \label{fig:beta}

    \vspace{-5pt}
\end{figure}

\section{Conclusion}
In this work, we revisited training-free segmentation through the lens of diffusion models and argued that the dominant spectral formulation adopted by prior methods is fundamentally misaligned with the nature of the problem. While diffusion attention provides rich semantic affinity, treating segmentation as a static graph partitioning objective leads to inherent limitations, including boundary oversmoothing, instability under noise, and a bias toward globally balanced partitions.
To address this, we proposed a new perspective: segmentation as a stochastic flow equilibrium problem. Under this formulation, meaningful regions are defined not by eigenvector geometry, but by how probability mass propagates and stabilizes over an affinity graph. This shift enables segmentation to emerge from the dynamics of a stochastic system, where segments correspond to flow-preserving ergodic components rather than solutions to a spectral objective. Based on this formulation, we introduced \ours{}, a training-free framework that integrates global diffusion attention with local spatial structure through stochastic propagation, and provided theoretical guarantees on convergence to a stable segmentation equilibrium.
Our results suggest that stochastic flow dynamics offer a principled alternative to spectral partitioning for training-free segmentation, and more broadly point toward dynamical systems as a robust foundation for unsupervised visual grouping.

\noindent \textbf{Limitations.} Our method relies on diffusion-based features and attention maps, and its performance is therefore bounded by the representational capacity of the underlying generative model. 
Furthermore, the current formulation is restricted to static images and does not explicitly model temporal consistency in video settings, making extensions to spatiotemporal segmentation a promising direction for future work. Finally, since the method builds upon pretrained generative models, potential biases inherited from these models may propagate into the segmentation outputs, which should be carefully considered when applying the approach in real-world or safety-critical scenarios.

\noindent \textbf{Impact Statement.} This work enables training-free semantic segmentation without requiring labeled data, facilitating broader adoption in data-scarce visual domains. More broadly, it introduces a stochastic dynamical perspective on segmentation that may inspire new research directions beyond spectral graph-based formulations.

\bibliography{main}
\bibliographystyle{icml2026}

% %%%%%%%%%%%%%%%%%%%%%%%%%%%%%%%%%%%%%%%%%%%%%%%%%%%%%%%%%%%%%%%%%%%%%%%%%%%%%%%
% %%%%%%%%%%%%%%%%%%%%%%%%%%%%%%%%%%%%%%%%%%%%%%%%%%%%%%%%%%%%%%%%%%%%%%%%%%%%%%%
% % APPENDIX
% %%%%%%%%%%%%%%%%%%%%%%%%%%%%%%%%%%%%%%%%%%%%%%%%%%%%%%%%%%%%%%%%%%%%%%%%%%%%%%%
% %%%%%%%%%%%%%%%%%%%%%%%%%%%%%%%%%%%%%%%%%%%%%%%%%%%%%%%%%%%%%%%%%%%%%%%%%%%%%%%
 \newpage
 \appendix
 \onecolumn

% \section{You \emph{can} have an appendix here.}

% You can have as much text here as you want. The main body must be at most $8$
% pages long. For the final version, one more page can be added. If you want, you
% can use an appendix like this one.

% The $\mathtt{\backslash onecolumn}$ command above can be kept in place if you
% prefer a one-column appendix, or can be removed if you prefer a two-column
% appendix.  Apart from this possible change, the style (font size, spacing,
% margins, page numbering, etc.) should be kept the same as the main body.
%%%%%%%%%%%%%%%%%%%%%%%%%%%%%%%%%%%%%%%%%%%%%%%%%%%%%%%%%%%%%%%%%%%%%%%%%%%%%%%
%%%%%%%%%%%%%%%%%%%%%%%%%%%%%%%%%%%%%%%%%%%%%%%%%%%%%%%%%%%%%%%%%%%%%%%%%%%%%%%

\section{Convergence Analysis of Markov Clustering}
\label{app:mcl}

We analyze the convergence of the Markov Clustering process using Birkhoff’s contraction theory, focusing on the expansion and inflation operators. For clarity of exposition, we omit the pruning step, which primarily accelerates convergence and does not affect the existence of the limit.

\mypar{Preliminaries.} For vectors $x,y \in \mathbb{R}^n_{>0}$, Hilbert's projective metric is defined as
\[
d_H(x,y)
=
\log\!\left(
\frac{\max_i x_i/y_i}{\min_i x_i/y_i}
\right).
\]
This metric is scale-invariant and is well-defined on the interior of the probability simplex.

For an order-preserving, homogeneous map $T : \mathbb{R}^n_{>0} \to \mathbb{R}^n_{>0}$, the
\emph{Birkhoff contraction coefficient} is
\[
\tau(T)
=
\sup_{x \neq y}
\frac{d_H(Tx,Ty)}{d_H(x,y)}.
\]
The \emph{projective diameter} of $T$ is
\[
\Delta(T)
=
\sup_{x,y \in \mathbb{R}^n_{>0}} d_H(Tx,Ty).
\]

A fundamental result due to Birkhoff~\citep{birkhoff1957} and Bushell~\citep{bushell1973} states that
if $\Delta(T) < \infty$, then
\[
\tau(T)
\le
\tanh\!\left(\frac{\Delta(T)}{4}\right).
\]

\begin{theorem}[Contraction of the Markov Clustering Operator]
Let $M \in \mathbb{R}^{n\times n}$ be a row-stochastic, primitive matrix and let $\ell \ge 2$. 
Define the Markov Clustering operator as
\[
T(M) := \Gamma_r(M^\ell),
\qquad
[\Gamma_r(X)]_{ij}
=
\frac{X_{ij}^r}{\sum_{j'} X_{i,j'}^r},
\quad r>1.
\]
where $M^\ell$ corresponds to the expansion step and $\Gamma_r$ to the inflation step. Then $T$ is a strict contraction with respect to Hilbert's projective metric. In particular,
\[
\tau(T)
\,\le\,
\tanh\!\left(\frac{\log r}{4}\right)
\,<\, 1,
\]
and the iterates $M_{t+1} = T(M_t)$ converge to a fixed point.
\end{theorem}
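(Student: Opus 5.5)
The plan is to factor the operator as $T=\Gamma_r\circ\mathcal{E}_\ell$, with expansion $\mathcal{E}_\ell(M)=M^\ell$, and to work row by row in Hilbert's projective metric. The metric on matrices will be the maximum over rows $i$ of $d_H(M_{i,\cdot},M'_{i,\cdot})$; by scale invariance, the row normalization inside $\Gamma_r$ is invisible to it. Convergence of $M_{t+1}=T(M_t)$ will then follow from the Banach fixed-point theorem. The underlying space is the product of projective simplices, which is complete under this metric on the positive cone.

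\emph{Step 1 (expansion).} Since $M$ is primitive, I will use that $M^\ell$ is entrywise positive for the exponents under consideration. Here I would argue that primitivity of $M$ gives $M^k>0$ for all $k$ beyond the primitivity index, so for $\ell\ge 2$ one may replace $M$ by a suitable power if needed. The map $x\mapsto M^\ell x$ is then a positive linear, order-preserving, homogeneous map with finite projective diameter $\Delta(M^\ell)$. The Birkhoff--Bushell bound quoted in the preliminaries then gives $\tau(\mathcal{E}_\ell)\le\tanh(\Delta(M^\ell)/4)<1$.

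\emph{Step 2 (inflation).} The map $x\mapsto x^{\circ r}$ satisfies $d_H(x^{\circ r},y^{\circ r})=r\,d_H(x,y)$, so on its own it expands distances by the factor $r$. The row normalization restores homogeneity only projectively. To obtain a constant independent of $M$, I would therefore not estimate $\tau(\Gamma_r)$ separately. Instead I would bound the projective diameter of the whole composite $T$ directly. The concrete intermediate claim is
\[
\Delta(T)=\sup_{M,M'} d_H\big(T(M),T(M')\big)\le \log r .
\]
I would try to prove it by comparing each inflated, normalized row with its maximal entry. Because $t\mapsto t^r/\sum_j t_j^r$ concentrates mass on the largest coordinates, ratios of normalized entries should be controlled by the derivative factor $r$ of the power map, and taking logarithms should give at most $\log r$. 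With this diameter bound, the cited inequality $\tau(T)\le\tanh(\Delta(T)/4)$ immediately yields $\tau(T)\le\tanh(\log r/4)<1$ for every finite $r>1$.

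\emph{Main obstacle.} The hardest step is the diameter bound in Step 2, and I expect the argument to need extra structure there. The Birkhoff--Bushell theorem is stated for order-preserving maps that are homogeneous of degree one, whereas $\Gamma_r$ is homogeneous of degree $r$ before normalization. Moreover, entrywise powers of rows with very small entries can have unbounded Hilbert diameter. My first attempt would be to restrict $T$ to the invariant set of positive stochastic matrices whose entry ratios are bounded by the (finite) diameter of $M^\ell$, and to show that $T$ maps this set into a subset of diameter at most $\log r$. If that fails uniformly, the fallback is to derive the bound $\tanh(\log r/4)$ only along the orbit, using the increasingly block-idempotent structure of the iterates. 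Finally, convergence to a fixed point follows from the contraction by the standard Cauchy-sequence argument in the complete projective metric space.
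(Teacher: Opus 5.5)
There is a genuine gap, and it cannot be closed: the contraction claim in the statement is false, so no choice of intermediate bound will rescue it. Take a positive probability vector $\pi$ and $M=\mathbf{1}\pi^\top$. It is row-stochastic and positive (hence primitive), and $M^\ell=M$ for every $\ell$ because $\pi^\top\mathbf{1}=1$. Hence $T(\mathbf{1}\pi^\top)=\mathbf{1}\,\Gamma_r(\pi)^\top$ with $\Gamma_r(\pi)=\pi^{\circ r}/\|\pi^{\circ r}\|_1$. For two such matrices, the Hilbert distance between corresponding rows (and between the flattened matrices) is multiplied by exactly $r$:
\[
d_H\bigl(\Gamma_r(\pi),\Gamma_r(\rho)\bigr)=r\,d_H(\pi,\rho).
\]
So in your row-wise metric $T$ expands distances by the factor $r>1$.

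The same family refutes the convergence claim. The iterates are $\mathbf{1}\pi_t^\top$ with $\pi_t\propto\pi^{\circ r^t}$, so $d_H(\pi_t,\mathbf{1})=r^t\,d_H(\pi,\mathbf{1})\to\infty$, and $d_H(\pi_{t+1},\pi_t)=(r-1)\,d_H(\pi_t,\mathbf{1})$ grows, so the orbit is not Cauchy. Entrywise, the iterates tend to $\mathbf{1}u_S^\top$, where $u_S$ is uniform on $S=\arg\max_j\pi_j$. This is a boundary point that depends on $\pi$, while the positive fixed point $\frac{1}{n}\mathbf{1}\mathbf{1}^\top$ is repelling. Column-wise metrics do not help either. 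On the positive matrices $\begin{pmatrix}p&1-p\\1-p&p\end{pmatrix}$ with $\ell=2$, $T$ acts on $s=\log\frac{p}{1-p}$ as $s\mapsto r\log\cosh s$. Its slope tends to $r$, and it has two fixed points at finite distance: $s=0$ and the positive root of $s=r\log\cosh s$. That is impossible for a strict contraction.

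Here is where your plan breaks, step by step.
\begin{itemize}
\item Step 1 bounds the linear map $x\mapsto M^\ell x$ for one fixed $M$. The map actually being iterated is the nonlinear map $M\mapsto M^\ell$ on matrices. Birkhoff's coefficient of $M^\ell$ says nothing about $d(M^\ell,M'^\ell)$ for two different matrices. Even in your reading, the constant $\tanh(\Delta(M_t^\ell)/4)$ varies with the iterate and approaches $1$ for orbits tending to block-diagonal limits. Also, primitivity does not give $M^2>0$, and replacing $M$ by a power changes $T$.
\item Step 2's key claim $\Delta(T)\le\log r$ is false. On the rank-one family, $T$ induces $\pi\mapsto\Gamma_r(\pi)$, which is a bijection of the open simplex, so $\Delta(T)=\infty$. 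Inflation multiplies log-ratios of entries by $r$; it does not cap them at $\log r$.
\item Even a finite diameter would not suffice, because $\tau\le\tanh(\Delta/4)$ is Birkhoff's theorem for positive \emph{linear} maps. For example, $(x_1,x_2)\mapsto(\min(x_1,e x_2),\min(x_2,e x_1))$ is order-preserving and homogeneous with diameter $2$. Yet it is the identity on $\{|\log(x_1/x_2)|\le 1\}$, so its coefficient is $1$.
\item Your fallbacks fail as well. No set $\{\text{entry ratios}\le K\}$ with $K>1$ is invariant: it contains $\mathbf{1}\pi^\top$ with $\max_j\pi_j/\min_j\pi_j\in(K^{1/r},K]$, whose image has ratio above $K$. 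The orbit-wise bound is contradicted by the growing distances computed above.
\end{itemize}

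The paper's own proof breaks at the corresponding point. It factors $\tau(T)\le\tau(\Gamma_r)\,\tau(M^\ell)$, relies on a lemma asserting $\Delta(\Gamma_r)=\log r$, and then applies the same $\tanh$ inequality to $\Gamma_r$. Your correct observation $d_H(x^{\circ r},y^{\circ r})=r\,d_H(x,y)$ already refutes that lemma. Moving the diameter bound to the composite does not repair it. The most one can hope for is a local statement: convergence near the idempotent equilibria of Markov clustering, with a limit that depends on the initial matrix, not a global contraction.
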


\begin{lemma}[Non-expansiveness of Expansion]
Let $A$ be a positive linear operator. Then
\[
d_H(Ax,Ay) \le d_H(x,y)
\quad \text{for all } x,y \in \mathbb{R}^n_{>0}.
\]
Consequently, for any $\ell \ge 1$,
\[
\tau(M^\ell) \le 1.
\]
\end{lemma}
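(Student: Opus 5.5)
The plan is to prove the Lemma directly from the definition of Hilbert's projective metric, using only positivity and linearity of $A$. Here "positive" means that $A$ maps $\mathbb{R}^n_{>0}$ into $\mathbb{R}^n_{>0}$. For a row-stochastic primitive $M$, this is guaranteed at least for $M^k$ with $k$ large enough; entrywise nonnegativity together with no zero rows already suffices, and the argument below only uses that.

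Fix $x,y\in\mathbb{R}^n_{>0}$ and set
\[
m:=\min_i x_i/y_i, \qquad \mu:=\max_i x_i/y_i,
\]
so that $m\,y\le x\le \mu\,y$ entrywise. Since $A$ has nonnegative entries, it is order-preserving. Linearity then gives $m\,Ay\le Ax\le \mu\,Ay$ entrywise.

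Every row of $A$ has a positive entry, so $Ay$ has strictly positive coordinates. Dividing coordinatewise, each ratio $(Ax)_i/(Ay)_i$ lies in $[m,\mu]$. Hence
\[
\max_i \frac{(Ax)_i}{(Ay)_i}\Big/\min_i \frac{(Ax)_i}{(Ay)_i}\;\le\; \frac{\mu}{m},
\]
and taking logarithms yields $d_H(Ax,Ay)\le d_H(x,y)$. Note that homogeneity of $A$ is what makes the scale-invariance of $d_H$ compatible here: rescaling $x$ or $y$ rescales $m,\mu$ consistently.

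For the consequence, apply the inequality to $A=M^\ell$. This is a product of nonnegative matrices with no zero rows (row-stochasticity gives row sums equal to one), so it maps the positive orthant into itself. Alternatively, iterate the bound $\ell$ times using $A=M$, obtaining the chain
\[
d_H(M^\ell x,M^\ell y)\le d_H(M^{\ell-1}x,M^{\ell-1}y)\le\cdots\le d_H(x,y).
\]
Dividing by $d_H(x,y)$ for $x,y$ not projectively equal and taking the supremum gives $\tau(M^\ell)\le 1$.

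The only delicate point is the hypothesis that $Ay>0$, so that the ratios are defined. I would address it explicitly by noting that row-stochastic matrices have no zero rows. Everything else is a routine monotonicity argument. I do not expect to need the Birkhoff--Bushell $\tanh$ bound here; that bound is reserved for the strict-contraction part of the Theorem.
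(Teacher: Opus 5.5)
Your proof is correct, but it takes a different route from the paper's. The paper does not actually prove non-expansiveness. It cites Birkhoff and Bushell for the fact that positive linear operators are non-expansive in $d_H$. It then invokes primitivity of $M$ to say that $M^\ell$ is entrywise positive for all sufficiently large $\ell$.

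You instead prove the cited fact directly, from the sandwich $m\,y\le x\le \mu\,y$ and the monotonicity of a nonnegative matrix. This needs only that $A$ has nonnegative entries and no zero row, which is the same as $A$ mapping $\mathbb{R}^n_{>0}$ into itself. That buys you two things:
\begin{itemize}
\item The argument is self-contained.
\item It covers every $\ell\ge 1$ using row-stochasticity alone. Taken at face value, the paper's primitivity step only justifies the bound for $\ell$ large enough that $M^\ell>0$, which does not match the ``for any $\ell\ge 1$'' in the statement.
\end{itemize}
Your version also makes clear that strict positivity (and hence primitivity) plays no role in non-expansiveness. It matters only for a finite projective diameter, and so for the strict bound $\tanh(\Delta/4)<1$, which you rightly keep out of this lemma. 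You also handle the degenerate $0/0$ case in $\tau$ correctly by restricting to pairs that are not projectively equal; the paper's definition, written with $x\neq y$, leaves that case implicit.

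The paper's citation is shorter and places the lemma inside the Birkhoff--Bushell framework used in the subsequent lemmas. Beyond that, it gives you nothing you need here.
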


\begin{proof}
This is a classical result due to Birkhoff~\citep{birkhoff1957} and Bushell~\citep{bushell1973}: positive linear operators are non-expansive in Hilbert's projective metric. Since $M$ is primitive, $M^\ell$ is positive for all sufficiently large $\ell$, and the bound follows.
\end{proof}

\begin{lemma}[Projective Diameter of Inflation]
For the inflation map $\Gamma_r$ with $r>1$, the projective diameter
\[
\Delta(\Gamma_r)
:=
\sup_{x,y \in \mathbb{R}^n_{>0}}
d_H(\Gamma_r(x),\Gamma_r(y))
\]
is finite and satisfies
\[
\Delta(\Gamma_r) = \log r.
\]
\end{lemma}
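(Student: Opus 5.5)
The plan is to reduce everything to one exact identity. Hilbert's metric is invariant under positive rescaling, so the row normalization in $\Gamma_r$ has no effect on $d_H$. For a single row $x\in\mathbb{R}^n_{>0}$ we may therefore replace $\Gamma_r(x)$ by the entrywise power $x^r$. Since $(x_i/y_i)^r$ is increasing in $x_i/y_i$, the max and min ratios are attained at the same indices, and
\[
d_H(\Gamma_r(x),\Gamma_r(y)) \,=\, \log\frac{\max_i (x_i/y_i)^r}{\min_i (x_i/y_i)^r} \,=\, r\, d_H(x,y).
\]
For matrices I would apply this row by row, using the maximum over rows as the metric.

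From this identity the projective diameter of $\Gamma_r$ is $r$ times the projective diameter of its input domain. This is where I expect the main obstacle. Over all of $\mathbb{R}^n_{>0}$ the supremum is infinite, because $d_H(x,y)$ is unbounded there. So the claimed equality $\Delta(\Gamma_r)=\log r$ cannot hold as literally written, and the lemma has to be read on a restricted domain.

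My first attempt at a repair is to take the domain to be the image of the expansion step, $\{M^\ell x\}$ with $M^\ell$ positive. Its projective diameter is finite: writing $m=\min_{ij}[M^\ell]_{ij}$ and $\bar m=\max_{ij}[M^\ell]_{ij}$, it is at most $2\log(\bar m/m)$. Then $\Delta(\Gamma_r\circ M^\ell)\le 2r\log(\bar m/m)$, which is finite, and the Birkhoff--Bushell bound gives strict contraction of the composite map. The constant here is $r$ times a data-dependent quantity, not $\log r$.

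To obtain exactly $\log r$ one would need the input domain to have diameter $(\log r)/r$. I do not see a natural hypothesis that forces this, so I would either state the lemma with that explicit assumption or replace $\log r$ by the bound above. The remaining steps are routine once the domain is fixed: verify monotonicity and homogeneity of degree $r$ so that the cited Birkhoff--Bushell machinery applies, and then take suprema. The substantive issue is only the domain restriction. I would flag that without it the lemma, and the resulting bound $\tau(T)\le\tanh(\log r/4)$ in the theorem, do not follow.
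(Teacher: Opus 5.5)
Your central computation is correct, and it shows that the statement itself is false. So here it is the paper's argument, not yours, that breaks.

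\textbf{The lemma and the paper's proof.} Scale invariance plus monotonicity of $t\mapsto t^r$ give $d_H(\Gamma_r(x),\Gamma_r(y))=r\,d_H(x,y)$ exactly. Hence $\Delta(\Gamma_r)=\infty$ on $\mathbb{R}^n_{>0}$ for $n\ge 2$, and $\Delta(\Gamma_r)=0$ when $n=1$; it is never $\log r$.

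The paper starts from the same ratio formula $\Gamma_r(x)_i/\Gamma_r(y)_i=(x_i/y_i)^r\sum_j y_j^r/\sum_j x_j^r$. Its claim that bounding the normalization factor by the extreme ratios yields \emph{uniform} bounds is wrong. With $\underline{\rho}=\min_j x_j/y_j$ and $\overline{\rho}=\max_j x_j/y_j$, that argument only places every ratio in $[(\underline{\rho}/\overline{\rho})^r,(\overline{\rho}/\underline{\rho})^r]$. That is, it gives $d_H(\Gamma_r(x),\Gamma_r(y))\le 2r\,d_H(x,y)$, which grows with $d_H(x,y)$, and no ``extremal configuration'' can then produce $\log r$. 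Your remark that the max and min ratios are attained at the same index is what sharpens this bound to an identity and exposes the error.

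\textbf{Your repair does not hold up.} The inequality $\tau(T)\le\tanh(\Delta(T)/4)$ is Birkhoff's theorem for positive \emph{linear} maps. It does not apply to $\Gamma_r$ or to $\Gamma_r\circ M^\ell$, which are homogeneous of degree $r>1$. Verifying monotonicity and degree-$r$ homogeneity cannot make it apply: for an order-preserving map homogeneous of degree $p$ one only gets $d_H(Tx,Ty)\le p\,d_H(x,y)$, which is a contraction only if $p<1$.

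Your own identity gives $\tau(\Gamma_r)=r>1$ on every domain containing two projectively distinct points. So no domain restriction rescues the subsequent lemma on contraction of inflation, or the theorem's bound $\tanh(\log r/4)$. Restating the diameter lemma with input diameter $(\log r)/r$ makes that lemma true, but it feeds nothing valid downstream.

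For the composite with $A=M^\ell$, the correct statement is $\tau(\Gamma_r\circ A)=r\,\tau(A)=r\tanh(\Delta(A)/4)$, using Birkhoff--Hopf for the linear factor. This is below $1$ only if $\tanh(\Delta(A)/4)<1/r$.

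Concretely, take $r=2$ and let $A$ be the row normalization of $\begin{pmatrix}1&a\\a&1\end{pmatrix}$ with $a=0.1$; this is $M^2$ for a positive symmetric stochastic $M$. Taking $x=(1+\delta,1)$, $y=(1,1)$ and $\delta\to 0$ gives contraction ratio $2\cdot\frac{1-a}{1+a}=\frac{18}{11}>1$. Yet $\Delta(\Gamma_2\circ A)=4\log 10$ is finite, and the tanh bound would predict at most $\tanh(\log 10)=\frac{99}{101}$.

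So your claim that the finite diameter $2r\log(\bar m/m)$ yields strict contraction of the composite is false without an extra smallness condition on $\Delta(M^\ell)$.
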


\begin{proof}
For $x,y>0$,
\[
\frac{\Gamma_r(x)_i}{\Gamma_r(y)_i}
=
\left(\frac{x_i}{y_i}\right)^r
\frac{\sum_j y_j^r}{\sum_j x_j^r}.
\]
Bounding the normalization factor using
\(
\min_i \tfrac{x_i}{y_i} \le \tfrac{x_j}{y_j} \le \max_i \tfrac{x_i}{y_i}
\)
yields uniform upper and lower bounds on these ratios, implying finiteness of the projective diameter. A sharp evaluation of the extremal configuration (Bushell, 1973) gives $\Delta(\Gamma_r)=\log r$.
\end{proof}

\begin{lemma}[Contraction of Inflation]
For all $r>1$,
\[
\tau(\Gamma_r)
\le
\tanh\!\left(\frac{\log r}{4}\right)
<1.
\]
\end{lemma}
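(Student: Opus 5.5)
The plan is to get the statement directly from the Birkhoff--Bushell inequality recalled in the Preliminaries, $\tau(T)\le\tanh(\Delta(T)/4)$ whenever $\Delta(T)<\infty$, using the preceding lemma (Projective Diameter of Inflation), which gives $\Delta(\Gamma_r)=\log r$. Substituting yields $\tau(\Gamma_r)\le\tanh(\log r/4)$ at once. Strictness follows from $r>1$: then $0<\log r<\infty$, and $\tanh$ maps $[0,\infty)$ into $[0,1)$, so $\tanh(\log r/4)<1$. Equivalently, $\tanh(\log r/4)=(r^{1/2}-1)/(r^{1/2}+1)<1$.

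Before this substitution, I need to check that $\Gamma_r$, viewed as a map on a single row $x\in\mathbb{R}^n_{>0}$, satisfies the hypotheses under which the Birkhoff--Bushell bound was stated. I would factor $\Gamma_r=N\circ P_r$, where:
\begin{itemize}
\item $P_r(x)_i=x_i^r$ is the coordinatewise power map;
\item $N(z)=z/\sum_j z_j$ is normalization onto the simplex.
\end{itemize}
Because $d_H$ is scale invariant, $N$ does not change any projective distance, so $d_H(\Gamma_r x,\Gamma_r y)=d_H(P_r x,P_r y)$, and it is enough to analyse $P_r$. The map $P_r$ is order preserving, since $t\mapsto t^r$ is increasing on $(0,\infty)$. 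It also maps rays to rays, because $P_r(\lambda x)=\lambda^r P_r(x)$, so it is well defined on projective classes. The ratio computation in the proof of the diameter lemma then controls $\Delta$ as stated there.

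The main obstacle is exactly this hypothesis check. The Birkhoff--Bushell bound is classically proved for positive \emph{linear} maps, or for order-preserving maps that are homogeneous of degree one. $P_r$ is homogeneous of degree $r$, and it satisfies the identity $d_H(P_r x,P_r y)=r\,d_H(x,y)$, which by itself is an expansion. So I would first try to read the contraction in the sense in which it is used in the theorem, namely for the composite $\Gamma_r\circ(\cdot)^\ell$. There the finite diameter comes from the bounded image of the positive matrix $M^\ell$, and the inequality is applied to the composite operator.

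If that reading does not go through, the fallback is to invoke the Birkhoff--Bushell inequality for $\Gamma_r$ as a black box with the diameter from the previous lemma, as the paper's preliminaries allow. I would record explicitly that the degree-$r$ homogeneity is the delicate point, where the justification must rest on the cited result rather than on a self-contained argument.
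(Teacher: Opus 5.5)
\textbf{The proposal has a fatal gap, and the statement itself is false, so no proof can be completed.} Your fallback is exactly the paper's proof: it substitutes $\Delta(\Gamma_r)=\log r$ from the preceding lemma into the Birkhoff--Bushell inequality. But the point you flagged as delicate is fatal, and your own computation already shows it. By scale invariance and your identity for $P_r$, every non-proportional pair of positive vectors (with $n\ge 2$) satisfies
\[
d_H(\Gamma_r x,\Gamma_r y)=d_H(P_r x,P_r y)=r\,d_H(x,y).
\]
Hence $\tau(\Gamma_r)=r>1>\tanh(\tfrac{\log r}{4})$. A concrete counterexample: $x=(1,1)$ and $y=(e,1)$ give $d_H(x,y)=1$ but $d_H(\Gamma_r x,\Gamma_r y)=r$. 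So the lemma fails for every $r>1$.

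The black-box step you and the paper rely on breaks on both of its inputs:
\begin{itemize}
\item \emph{The diameter is infinite.} Taking $y=(e^M,1)$ gives $d_H(\Gamma_r x,\Gamma_r y)=rM\to\infty$. So $\Delta(\Gamma_r)=\infty$, and the preceding diameter lemma is also false.
\item \emph{The inequality does not apply.} $\Gamma_r$ is not linear. It is homogeneous of degree $0$, and it is not even order-preserving: $(1,1)\le(2,1)$, yet the second coordinate of the image drops from $1/2$ to $1/(2^r+1)$. The bound $\tau\le\tanh(\Delta/4)$ is the Birkhoff--Hopf theorem for positive \emph{linear} maps. Whatever nonlinear extension one has in mind cannot cover $\Gamma_r$, since its exact value $\tau(\Gamma_r)=r$ contradicts the conclusion.
\end{itemize}

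Your ``composite'' reading does not rescue the lemma either. First, the lemma concerns $\tau(\Gamma_r)$ alone. Second, for a fixed positive matrix $A$ the best available bound is
\[
d_H(\Gamma_r(Ax),\Gamma_r(Ay))=r\,d_H(Ax,Ay)\le r\tanh(\Delta(A)/4)\,d_H(x,y).
\]
This is a contraction only under the extra, $A$-dependent condition $r\tanh(\Delta(A)/4)<1$. It never gives the universal bound $\tanh(\log r/4)$. Third, in the Markov-flow update $M\mapsto\Gamma_r(M^\ell)$ the matrix being powered is the iterate itself, so the update is not a composition with a fixed linear map at all.

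The correct outcome of your hypothesis check is this counterexample, not a proof. Inflation expands Hilbert's metric by exactly the factor $r$, which is its purpose (sharpening rows). Any convergence argument for the iteration must therefore come from somewhere other than a contraction property of $\Gamma_r$.
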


\begin{proof}
By Birkhoff's inequality~\citep{birkhoff1957,bushell1973}, any order-preserving homogeneous map $T$ with finite projective diameter satisfies
\[
\tau(T)
\le
\tanh\!\left(\frac{\Delta(T)}{4}\right).
\]
Applying this inequality to $\Gamma_r$ and using Lemma~2 yields the result.
\end{proof}

\begin{proof}[Proof of Theorem~1]
Consider the composition $T = \Gamma_r \circ M^\ell$. 
By multiplicativity of Birkhoff contraction coefficients,
\[
\tau(T)
\le
\tau(\Gamma_r)\,\tau(M^\ell).
\]
By Lemma~1, $\tau(M^\ell)\le1$, and by Lemma~3,
\[
\tau(T)
\le
\tanh\!\left(\frac{\log r}{4}\right)
<1
\quad (r>1).
\]
Thus $T$ is a strict contraction in Hilbert's projective metric. Since the space of positive column-stochastic matrices is complete modulo scaling, Banach's fixed-point theorem implies existence, uniqueness (up to permutation), and convergence of the iterates.
\end{proof}

\end{document}